\newtheorem{lemma}{Lemma}
\newtheorem{proposition}{Proposition}
\newtheorem{theorem}{Theorem}
\newcommand{\R}{\mathbb{R}}
\newcommand{\N}{\mathbb{N}}
\newcommand{\maxb}[1]{\max\{#1\}}
\newcommand{\conv}[1]{\mathrm{conv}(#1)}
\newcommand{\nn}[1]{\mathsf{ReLU}_n(#1)}
\newcommand{\cpwl}{\mathsf{CPWL}_n}
\newcommand{\lm}{\mathsf{LM}_n}
\title{On Minimal Depth in Neural Networks}
\author{Juan L. valerdi}
\date{}
\begin{document} 
\maketitle

\begin{abstract}
  Understanding the relationship between the depth of a neural network and its representational capacity is a central problem in deep learning theory. In this work, we develop a geometric framework to analyze the expressivity of ReLU networks with the notion of depth complexity for convex polytopes. The depth of a polytope recursively quantifies the number of alternating convex hull and Minkowski sum operations required to construct it. This geometric perspective serves as a rigorous tool for deriving depth lower bounds and understanding the structural limits of deep neural architectures.

  We establish lower and upper bounds on the depth of polytopes, as well as tight bounds for classical families. These results yield two main consequences. First, we provide a purely geometric proof of the expressivity bound by Arora et al. (2018), confirming that $\lceil \log_2(n+1)\rceil$ hidden layers suffice to represent any continuous piecewise linear (CPWL) function. Second, we prove that, unlike general ReLU networks, convex polytopes do not admit a universal depth bound. Specifically, the depth of cyclic polytopes in dimensions $n \geq 4$ grows unboundedly with the number of vertices. This result implies that Input Convex Neural Networks (ICNNs) cannot represent all convex CPWL functions with a fixed depth, revealing a sharp separation in expressivity between ICNNs and standard ReLU networks.
\end{abstract}

\section{Introduction}\label{sec:intro}

A \textit{ReLU neural network} of depth $m$, or with $m$ hidden layers, is a function $f:\R^n\rightarrow\R$ expressed as
\begin{equation*} 
    f = T^{(m+1)} \circ \sigma_m \circ T^{(m)} \circ \sigma_{m-1} \circ \dots \circ T^{(2)} \circ \sigma_1 \circ T^{(1)},
\end{equation*}
where $T^{(l)}:\R^{n_{l-1}} \rightarrow \R^{n_l}$ are $m + 1$ affine transformations, $n_0 = n$ and $n_{m+1} = 1$. The activation function $\sigma_l$ corresponds to the vectorized ReLU function $\sigma_l(x_1, \ldots, x_{n_l}) = (\maxb{x_1, 0}, \maxb{x_2, 0}, \ldots, \maxb{x_{n_l}, 0})$.

ReLU neural networks are continuous piecewise linear (CPWL) functions \cite{arora2018understanding, devore2021neural}. An important problem in learning theory is to characterize which CPWL functions can be represented by a ReLU network of a given depth. Such expressivity results are relevant for quantifying depth-dependent convergence rates for various function classes \cite{daubechies2022neural, daubechies2022nonlinear, devore2021neural, yarotsky2017error} and enhancing our theoretical understanding of deep architectures \cite{arora2018understanding, telgarsky2016benefits, eldan2016power, liang2017why}.

We denote the set of all CPWL functions from $\R^n\rightarrow\R$ as $\cpwl$, and the subset of these functions that can be represented by ReLU networks of depth $m$ as $\nn{m}$.

It has been shown that $\lceil \log_2(n + 1) \rceil$ hidden layers are sufficient to represent any CPWL function.
\begin{theorem}[Arora, Basu, Mianjy and Mukherjee \cite{arora2018understanding}, Theorem 2.1] \label{thm:arora}
  \[
    \cpwl = \nn{\lceil \log_2(n + 1) \rceil}.
  \]
\end{theorem}

The minimal depth required to represent any CWPL function remains an open question. Hertrich, Basu, Di Summa, and Skutella \cite{hertrich2023towards} proposed an approach that reduces the problem to analyzing a single function.

\begin{proposition}[Hertrich, Basu, Di Summa and Skutella \cite{hertrich2023towards}, Proposition 1.6] \label{prop:maxn}
  \[
    \cpwl = \nn{m} \Longleftrightarrow \maxb{x_1, x_2, \ldots, x_n, 0} \in \nn{m}.
  \]
\end{proposition}

Following \cref{prop:maxn}, the goal is to find the minimum $m$ such that the function $\maxb{x_1, \ldots, x_n, 0}\in\nn{m}$. 

Hertrich, Basu, Di Summa, and Skutella \cite{hertrich2023towards} also conjectured that this depth is exactly $\lceil \log_2(n + 1) \rceil$, which had been proven for dimensions $n \leq 3$ \cite{devore2021neural, mukherjee2017lower}. Much of the progress toward resolving this problem has come from formulating \cref{prop:maxn} in the language of convex polytopes, i.e. the convex hull of finitely many points \cite{grunbaum2003convex, ziegler1995lectures}. Haase, Hertrich, and Loho \cite{haase2023lower} proved the conjecture for ReLU networks with integer weights. This result was later extended by Averkov, Hojny, and Merkert \cite{averkov2025on} to networks with decimal fractional weights. Furthermore, Grillo, Hertrich, and Loho \cite{grillo2025depth} proved the conjecture for $n = 4$ in networks compatible with specific polyhedral complexes. More recently, the conjecture was disproved by Bakaev, Brunck, Hertrich, Stade, and Yehudayoff \cite{bakaev2025better} via polyhedral subdivisions of the simplex; they showed that $\lceil \log_3(n - 1) \rceil + 1$ hidden layers are sufficient to represent any CPWL function.

In this work, we introduce the \textit{depth complexity} of polytopes, a geometric analogue of neural network depth, to analyze the expressivity of ReLU networks. Although this concept has been used implicitly in prior work \cite{hertrich2023towards,haase2023lower,bakaev2025better}, we provide a systematic study of it.

We denote the depth complexity of a polytope $P$ as $d(P)$ and define it recursively as follows. If $P$ consists of a single point, we define $d(P) = 0$. If $P$ is not a single point, we define $d(P) = m$, where $m$ is the smallest positive integer such that
\begin{equation} \label{eq:depth}
  P = \sum_{i = 1}^q \conv{P_{i1}, P_{i2}}, \quad \text{with } d(P_{ij}) < m \quad \forall i, j.
\end{equation}
Here, $\conv{P_1, P_2} = \conv{P_1 \cup P_2}$ and $P_1 + P_2 = \{a_1 + a_2: a_i\in P_i\}$ are the \textit{convex hull} and \textit{Minkowski sum} operations respectively. 

At its core, the depth complexity of a polytope quantifies the number of alternating steps of convex hulls and Minkowski sums required to construct it. This serves as a measure of how ``complex'' it is to represent a polytope using these two operations. Under this measure, the simplest polytope is a point and depth-1 polytopes correspond to zonotopes, which are Minkowski sums of segments.

Before connecting the depth of polytopes and ReLU networks, we introduce some terminology. A \textit{linear max function} is a function of the form $f(x) = \maxb{a_1\cdot x, \ldots, a_p\cdot x}$ with $a_i\in\R^n$ for all $i$. Notably, linear max functions are convex and \textit{positively homogeneous}, meaning they satisfy $f(\lambda x) = \lambda f(x)$ for $\lambda \geq 0$.

Let $\lm$ denote the collection of linear max functions from $\R^n\rightarrow\R$, and let $\mathcal{P}_n$ represent the set of convex polytopes in $\R^n$. These two sets constitute semirings with the \textit{max} and \textit{sum} operations, and \textit{convex hull} and \textit{Minkowski sum} operations, respectively.

For the function $f(x) = \maxb{a_1\cdot x, \ldots, a_p\cdot x}$, its \textit{Newton polytope} is defined as
\[
  \mathcal{N}f := \conv{a_1, \ldots, a_p}.
\]
Similarly, for the polytope $P = \conv{a_1, \ldots, a_p}$, the associated \textit{support function} is
\[
  \mathcal{F}P(x) := \maxb{a_1\cdot x, \ldots, a_p\cdot x}.
\]
The mappings $\mathcal{N}$ and $\mathcal{F}$ are isomorphisms between the semirings ($\lm$, max, $+$) and ($\mathcal{P}_n$, conv, $+$) \cite{maclagan2021introduction, zhang2018tropical}. 

The following result establishes a depth connection between ReLU networks and polytopes.

\begin{theorem}[Hertrich, Basu, Di Summa and Skutella \cite{hertrich2023towards}, Theorem 5.2] \label{thm:relu-poly}
  Let $f\in\cpwl$ be a positively homogeneous function. Then, $f\in\nn{m}$ if and only if there exist $f_i\in \lm, i = 1,2, $ such that $f = f_1 - f_2$ and $d(\mathcal{N}f_i) \leq m$.
\end{theorem}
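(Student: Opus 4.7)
I would prove the two directions separately: the backward direction follows by induction from the semiring isomorphism between $(\Upsilon_\Delta,\max,+)$ and $(\Delta,\mathrm{conv},+)$, while the forward direction requires an additional bias-elimination step that exploits the positive homogeneity of $f$.

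\textbf{Backward direction.} The first step is to establish, by induction on $m$, the auxiliary claim that $P\in\pp{m}$ implies $\mathcal{F}P\in\nn{m}$. For $m=0$, a singleton $\{a\}$ gives the linear function $a\cdot x\in\nn{0}$. For the inductive step, I would write $P=\sum_i\conv{P_i,Q_i}$ with $P_i,Q_i\in\pp{m-1}$; the isomorphism turns this into $\mathcal{F}P=\sum_i\maxb{\mathcal{F}P_i,\mathcal{F}Q_i}$, and the induction hypothesis places each operand in $\nn{m-1}$, so each inner max lies in $\nn{m}$ and the outer sum preserves depth. Applying this claim to $f_i=\mathcal{F}(\mathcal{N}f_i)$ and using closure of $\nn{m}$ under scalar multiplication and equal-depth sums gives $f=f_1-f_2\in\nn{m}$.

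\textbf{Forward direction.} Here the key ingredient is a bias-elimination lemma: any positively homogeneous $f\in\nn{m}$ admits a representation of depth $m$ in which every affine transformation $T^{(l)}$ is linear. Starting from any representation $N$ of $f$, let $\tilde N$ be the network obtained by zeroing every bias; a layerwise computation shows $\tilde N(x)=\lim_{\lambda\to\infty}N(\lambda x)/\lambda$, because once $\lambda$ is large the sign pattern of every pre-activation stabilizes to that of its linear part and the biases become negligible. Positive homogeneity then forces $\tilde N\equiv f$. With a bias-free representation in hand, every pre-activation $z_j$ at the last hidden layer is positively homogeneous and lies in $\nn{m-1}$, and the output reads $f(x)=\sum_j w_j\,\sigma(z_j(x))$. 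I then induct on $m$: the base case $m=0$ decomposes $f(x)=a\cdot x$ as $\mathcal{F}\{a\}-\mathcal{F}\{0\}$ with $\{a\},\{0\}\in\pp{0}$; for the inductive step, I apply the induction hypothesis to write each $z_j=\mathcal{F}A_j-\mathcal{F}B_j$ with $A_j,B_j\in\pp{m-1}$, invoke the identity $\sigma(\mathcal{F}A_j-\mathcal{F}B_j)=\mathcal{F}(\conv{A_j,B_j})-\mathcal{F}B_j$ to rewrite each summand as a difference of support functions of polytopes in $\pp{m}$, split the $w_j$ by sign, and collect using the (easily verified) closure of $\pp{m}$ under Minkowski sums and signed scalar multiples. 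This yields $f=\mathcal{F}P_1-\mathcal{F}P_2$ with $P_1,P_2\in\pp{m}$, so the desired $f_i=\mathcal{F}P_i$ satisfy $f_i\in\Upsilon_\Delta$ and $\mathcal{N}f_i\in\pp{m}$.

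\textbf{Main obstacle.} The hardest ingredient is the bias-elimination lemma. The delicate point is justifying the scaling limit rigorously layer by layer, in particular verifying that for sufficiently large $\lambda$ the sign pattern of each pre-activation stabilizes to that of its linear part, so that $\sigma$ commutes with the limit and the biases drop out. Once this is settled, the remainder of the argument amounts to a clean translation between ReLU operations and operations on polytopes through the $\mathcal{N},\mathcal{F}$ isomorphism.
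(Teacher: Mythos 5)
This theorem is imported from \cite{hertrich2023towards} and the paper gives no proof of it, so there is nothing to compare against; judged on its own, your argument is correct and is essentially the standard one from that reference (bias elimination via the scaling limit, then induction using $\max\{g-h,0\}=\max\{g,h\}-h$ together with the $\mathcal{N},\mathcal{F}$ isomorphism and closure of $\pp{m}$ under Minkowski sums and nonnegative dilations). One remark: the step you flag as delicate is not --- you do not need the sign pattern to stabilize, since $\sigma(\lambda z+b)/\lambda=\sigma(z+b/\lambda)\to\sigma(z)$ follows directly from continuity and positive homogeneity of $\sigma$, applied layer by layer.
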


Combining \cref{prop:maxn} with \cref{thm:relu-poly} gives an equivalent geometric statement of \cref{prop:maxn}.

\begin{theorem} \label{thm:cpwl-poly}
  \[
    \cpwl = \nn{m}
  \]
  if and only if there exist polytopes $P_i, i = 1,2, $ such that
  \[
    P_1 = \mathcal{N}(\maxb{x_1, x_2, \ldots, x_n, 0}) + P_2 \quad \text{and} \quad d(P_i) \leq m.
  \]
\end{theorem}

To fully leverage \cref{thm:cpwl-poly}, it is relevant to establish foundational results on the depth complexity of polytopes. In addition, two natural questions arise: What is the minimum depth required to represent any polytope? And, what is the depth complexity of simplices like $\mathcal{N}(\maxb{x_1, x_2, \ldots, x_n, 0})$?\\

\textit{Depth Complexity Results}\\

In \cref{sec:upper}, we derive depth complexity upper bounds based on Minkowski sums, affine transformations, convex hull representations, and combinatorial data such as the number of vertices, edges, and 2-faces. Combined with \cref{thm:relu-poly}, these bounds yield depth estimates for ReLU network representations of linear max functions.

An affine transformation is a mapping $\varphi(x) = Ax + b$, where $A$ is a linear map and $b$ is a translation vector. A convex hull representation of a polytope $P$ in terms of other polytopes $P_1, \ldots, P_k$ is a decomposition of the form $P = \conv{P_1, P_2, \ldots, P_k}$. 

If the polytopes $P_i$ are the vertices, edges or 2-faces of $P$, we can compute depth estimates based on their counts. For a polytope with $k$ vertices, we obtain the upper bound $\lceil \log_2 k\rceil$, which is tight for certain families of polytopes.

In \cref{sec:lower}, we derive lower bounds on depth complexity using its graph and face structure. Let $G(P)$ denote the 1-skeleton of $P$, i.e. the graph whose vertices are those of $P$ and in which two vertices are adjacent when they span an edge of $P$.

We show that if $G(P)$ contains a complete subgraph on $k$ vertices, then $d(P) \geq \lceil \log_2 k\rceil$. This follows from the fact that complete subgraphs are pervasive in expressions like (\ref{eq:depth}): if $G(P)$ contains a complete subgraph, then at least a summand in the Minkowski sum will have one, which then propagates through the polytopes in the convex hull. 

Moreover, for any face $F$ of $P$, we have $d(F) \leq d(P)$. Hence, $d(P)$ is at least the maximum depth complexity among its faces. 

In \cref{sec:polytopes}, we determine the depth complexity of several families of polytopes, including pyramids, bipyramids, prisms, simplices, and cyclic polytopes. Using depth estimates based on vertex counts and complete subgraphs, it follows directly that polytopes with $k$ vertices whose graphs are complete, such as simplices and cyclic polytopes, have depth complexity $\lceil \log_2 k\rceil$.

This result answers both motivating questions posed earlier. For simplices, the main application follows from \cref{thm:cpwl-poly}, where we recover the same bound as in \cref{thm:arora}, thus providing an alternative geometric proof. In contrast, for dimension $n\geq 4$, cyclic polytopes form a family whose depth grows unbounded as the number of vertices increases. This shows that there is no universal upper depth bound to represent polytopes--unlike the case for ReLU networks in \cref{thm:arora}. Notably, the proofs of these results rely only on elementary polyhedral theory.

We conclude \cref{sec:polytopes} by showing that cyclic polytopes not only exhibit unbounded depth, but in dimensions $n\geq 5$, their Minkowski sum with zonotopes yield, for any given depth $m$, families of polytopes with an increasing number of vertices, all of depth $m$.\\

\textit{Implications for Input Convex Neural Networks}\\

Input Convex Neural Networks (ICNNs) \cite{amos2017input} are ReLU networks constrained to use only monotone affine transformations and skip connections from the input layer. Here, a monotone affine transformation means an affine map $Ax + b$ where the matrix $A$ has nonnegative entries.

ICNNs are interesting because they can represent any convex CPWL function \cite{bakaev2025depth, gagneux2025convexity} and have found many  successful applications \cite{lemaire2024new, xing2024optimization, bunning2021input, chen2018optimal}.

Following \cite{bakaev2025depth}, we can define under the ICNN model, a corresponding depth complexity $d_0(P)$ for a polytope $P$: $d_0(P) = 0$ if $P$ is a single point, else $d_0(P) = m$ for $m$ the minimum value such that
\[
  P = \sum_{i = 1}^p \conv{P_{i1}, P_{i2}}, \quad \text{with } d_0(P_{i1}) < m \text{ and } d_0(P_{i2}) = 0 \quad \forall i.
\]

In \cite{bakaev2025depth} the authors showed that $d_0(\mathcal{N}(\maxb{x_1, \ldots, x_n})) = n$, and that for $n=3$ there exists a family of polytopes with unbounded $d_0$-complexity. For ICNNs, $d_0(P)$ also constitutes the minimum ICNN depth required to represent $\mathcal{F} P$.

Since $d(P) \leq d_0(P)$, and cyclic polytopes with $k$ vertices satisfy $d(P) = \lceil \log_2 k \rceil$ for $n \geq 4$, they also form an unbounded depth family for $d_0$. Thus, although ICNNs can represent every convex CPWL function, no fixed depth bound suffices in general. This contrasts with the universal depth bound for ReLU networks established in \cref{thm:arora}.\\

\textbf{Acknowledgements.} I am deeply grateful to Francisco Santos for \cref{thm:graph}, valuable discussions, and his hospitality at the University of Cantabria. I also thank Ansgar Freyer for discussions on the depth complexity of simplices.

\section{Depth Upper Bounds} \label{sec:upper}

We begin by computing basic bounds for Minkowski sums, convex hulls and affine maps. 

\begin{proposition}\label{prop:basic_upper} 
  Let $P_i, i = 1, 2,$ be polytopes with $d(P_i) \leq m_i$ and $\varphi$ be an affine transformation. Then,
  \begin{enumerate}[label=(\alph*)]
  \item $d(P_1 + P_2) \leq \maxb{m_1, m_2}$
  \item $d(\conv{P_1, P_2}) \leq \maxb{m_1, m_2} + 1$
  \item $d(\varphi(P_1)) \leq d(P_1)$
  \end{enumerate}
\end{proposition}
\begin{proof}
  These depth bounds follow immediately from the depth complexity definition. If $d(P_i) \leq m_i$, it implies that we can express $P_i$ as 
  \[
    P_i = \sum_{j = 1}^{q_i} \conv{Q_{j, i}, R_{j, i}},\quad d(Q_{j, i}), d(R_{j, i}) < m_i \quad \forall i, j.
  \]
  Then,
  \[
    P_1 + P_2 = \sum_{j = 1}^{q_1} \conv{Q_{j, 1}, R_{j, 1}} + \sum_{j = 1}^{q_2} \conv{Q_{j, 2}, R_{j, 2}}.
  \]
  All $Q_{j, i}$ and $P_{j, i}$ have depth strictly less than $\maxb{m_1, m_2}$, therefore $d(P_1 + P_2) \leq \maxb{m_1, m_2}$.

  Similarly, since $d(P_i) \leq \maxb{m_1, m_2}$, it follows by definition that $$d(\conv{P_1, P_2}) \leq \maxb{m_1, m_2} + 1.$$
  Let $\varphi(x) = Ax + b$ be an affine map, if $d(P_1) = 0$, it is trivial that $d(\varphi(P_1)) = 0$. For the purpose of induction, assume the statement is true up to $m - 1$. If $d(P_1) = m$, then
  \[
    \varphi(P_1) = \varphi\Big(\sum_{j = 1}^{q_1} \conv{Q_{j, 1}, R_{j, 1}}\Big) = \sum_{j = 1}^{q_1} \conv{AQ_{j, 1}, AR_{j, 1}} + b, 
  \]
  and by induction hypothesis $d(AQ_{j, 1}), d(AR_{j, 1}) < m$, implying $d(\varphi(P_1)) \leq d(P_1)$.
\end{proof}

If an affine transformation $\varphi$ is invertible, note that $d(\varphi(P)) = d(P)$. This also implies that $d(P_1 + P_2) = d(P_1)$ for $d(P_2) = 0$, which is simply a translation.

When $P = \conv{P_1, \ldots, P_k}$, iterating \cref{prop:basic_upper}(b) yields an upper bound for $d(P)$. There are several ways to iterate a binary operation on $k$ elements, corresponding to different choices of binary bracketing.

Any binary bracketing of $\conv{P_1, \ldots, P_k}$ can be encoded by a full binary tree whose leaves are labeled by $1, \ldots, k$: each internal node records one application of $\conv{\cdot,\cdot}$, and its two children correspond to the two subexpressions being combined. Conversely, every full binary tree with leaves labeled by $1, \ldots, k$ determines a binary bracketing of $\conv{P_1, \ldots, P_k}$. For a leaf $i$, let $\ell_i$ denote its depth in the tree, that is, its distance to the root.

We identify the smallest bound obtainable by iterating \cref{prop:basic_upper}(b) using Kraft's inequality \cite[Chapter 5.2]{cover2006elements}. 

\begin{theorem}[Kraft inequality for binary trees]
Let $\ell_1, \ldots, \ell_k$ be positive integers. Then, there exists a binary tree with leaves at depths $\ell_1, \ldots, \ell_k$ if and only if
\[
  \sum_{i=1}^k 2^{-\ell_i} \le 1.
\]
\end{theorem}

\begin{proposition} \label{prop:optimal_hull}
  Let $P = \conv{P_1, \ldots, P_k}$ with $d(P_i) \leq m_i$ for all $i$. Then
  \[
    d(P) \leq \Big\lceil \log_2\Big(\sum_{i=1}^k 2^{m_i}\Big)\Big\rceil.
  \]
  Moreover, this is the smallest bound that can be obtained by iterating \cref{prop:basic_upper}(b) over the polytopes $P_i$.
\end{proposition}
\begin{proof}
  For $k = 1$, the statement is immediate. Assume $k \geq 2$, fix a binary bracketing of $\conv{P_1, \ldots, P_k}$, and let $T$ be the corresponding full binary tree.

  We claim that the upper bound produced by iterating \cref{prop:basic_upper}(b) according to $T$ is
  \[
    M_T = \max_{1 \leq i \leq k}(m_i + \ell_i).
  \]
  We prove this by induction on the number of leaves of $T$. If $T$ has two leaves, the result is a direct consequence of \cref{prop:basic_upper}(b). If $T$ has more than two leaves, it means that the root has two subtrees $T_1$ and $T_2$, and applying \cref{prop:basic_upper}(b) at the root gives
  \[
    M_T = \maxb{M_{T_1}, M_{T_2}} + 1.
  \]
  By induction, each $M_{T_j}$ is the maximum of $m_i + \ell_i - 1$ over the leaves of $T_j$, because their depths inside the subtree are one less than in $T$. Hence
  \[
    M_T = \max_{1 \leq i \leq k}(m_i + \ell_i).
  \]

  Every binary tree satisfies Kraft's inequality
  \[
    \sum_{i=1}^k 2^{-\ell_i} \leq 1.
  \]
  Therefore,
  \[
    m_i + \ell_i \leq M_T \quad \forall i
  \]
  implies
  \[
    2^{m_i - M_T} \leq 2^{-\ell_i} \quad \forall i,
  \]
  and summing over $i$, we obtain
  \[
    2^{-M_T}\sum_{i=1}^k 2^{m_i} \leq 1,
  \]
  so
  \begin{equation}
    \label{eq:kraft}
    M_T \geq \Big\lceil \log_2\Big(\sum_{i=1}^k 2^{m_i}\Big)\Big\rceil.
  \end{equation}

  To show that this lower bound is attained for some $T$, set
  \[
    M = \Big\lceil \log_2\Big(\sum_{i=1}^k 2^{m_i}\Big)\Big\rceil
    \qquad\text{and}\qquad
    L_i = M - m_i.
  \]
  Therefore,
  \[
    \sum_{i=1}^k 2^{-L_i} = 2^{-M}\sum_{i=1}^k 2^{m_i} \leq 1.
  \]
  By Kraft's inequality, there exists a binary tree $T_0$ with leaves at depths $L_1, \ldots, L_k$. If $T_0$ is not full, contract every internal node with exactly one child. This produces a full binary tree $T$ with the same labeled leaves, and the depth $\ell_i$ of leaf $i$ in $T$ satisfies $\ell_i \leq L_i$ for all $i$. The tree $T$ therefore determines a binary bracketing of $\conv{P_1, \ldots, P_k}$, and for this bracketing
  \[
    M_T = \max_{1 \leq i \leq k}(m_i + \ell_i) \leq \max_{1 \leq i \leq k}(m_i + L_i) = M.
  \]
  Combined with (\ref{eq:kraft}), we get $M_T = M$, which is then the smallest upper bound obtainable by iterating \cref{prop:basic_upper}(b).
\end{proof}

If $m_i \leq m$ for all $i$ in \cref{prop:optimal_hull}, then
\[
  d(P) \leq \Big\lceil \log_2\Big(\sum_{i=1}^k 2^{m_i}\Big)\Big\rceil \leq \lceil \log_2 k \rceil + m.
\]
We will use this simplified form to obtain depth estimates when the numbers of low-dimensional faces of a polytope are known.

For this, we also need the following concepts: $\rho(P)$ and $\rho_2(P)$ are the minimum number of edges and 2-faces, respectively, needed to cover all vertices of the polytope $P$; $\nu(P)$ is the size of a maximum matching of the graph $G(P)$, i.e. the largest set of edges in $G(P)$ with no common vertices; $P_x$ is the vertex figure of $P$ at a vertex $x$, i.e. the intersection of $P$ with a hyperplane that separates $x$ from the other vertices of $P$.

\begin{proposition} \label{prop:f}
  Let $f_0, f_1, f_2$ be the number of vertices, edges and 2-faces respectively of a polytope $P$. Then, we have the following depth bounds depending of each $f_i$ individually:
  \begin{enumerate}[label=(\alph*)]
  \item $\text{For } f_0:\quad d(P) \leq \lceil \log_2 f_0 \rceil$
  \item $\text{For } f_1:\quad d(P) \leq \Big\lceil\log_2\Big(\Big\lfloor\frac{2f_1}{n} - \min\Big(n, \Big\lfloor \frac{1+\sqrt{1+8f_1}}{4} \Big\rfloor\Big)\Big\rfloor\Big)\Big\rceil + 1$
  \item $\text{For } f_2:\quad d(P) \leq \Big\lceil \log_2 \Big(f_2 - \binom{n}{2} + 1\Big)\Big\rceil + 2$
  \end{enumerate}
\end{proposition}
\begin{proof}
  Any polytope $P$ can be expressed as $P = \conv{P_1, \ldots, P_{f_j}}$, where $P_i$ are $j$-dimensional faces of $P$.\\

  (a) If $P_i$ are the vertices, then $d(P_i) = 0$, and by \cref{prop:optimal_hull} we obtain  
  \[
    d(P) \leq \lceil \log_2 f_0 \rceil.
  \]

  (b) If $P_i$ are the edges of $P$, then $d(P_i) = 1$. We do not need all $f_1$ edges as $P$ can be written as  
  \[
    P = \conv{P_{i_1}, \ldots, P_{i_{\rho(P)}}},
  \]  
  where $\rho(P)$ is the size of a minimum edge cover $P_{i_1}, \ldots, P_{i_{\rho(P)}}$. 

  To estimate $\rho(P)$, we use standard relations between $\rho(P)$, $\nu(P)$, $f_0$, and $f_1$ (see \cite[Chapter 3.1]{west2001introduction}):  
  \[
    \rho(P) + \nu(P) = f_0, 
    \qquad 
    \nu(P) \geq \min\big(n, \lfloor f_0/2 \rfloor\big).
  \]  
  Furthermore, the degree sum formula and the complete graph vertices bound yield  
  \[
    nf_0 \leq 2f_1 \leq f_0(f_0 - 1).
  \]  
  Combining these inequalities gives  
  \[
    \rho(P) \leq \Big\lfloor \frac{2f_1}{n} - \min\Big(n, \Big\lfloor \tfrac{1+\sqrt{1+8f_1}}{4} \Big\rfloor\Big)\Big\rfloor,
  \]  
  from which the stated bound on $d(P)$ in terms of $f_1$ follows.
  
  (c) Consider the case of $2$-faces. In \cref{sec:polytopes} we show that the depth complexity of polygons is at most $2$. Hence, similarly as (b), it suffices to bound $\rho_2(P)$, the minimum number of $2$-faces covering all vertices of $P$, and apply \cref{prop:optimal_hull}.  

  Let $P_x$ be the vertex figure of $P$ at a vertex $x$. Since $\dim P_x = n-1$, the degree sum formula implies that $P_x$ has at least $\binom{n}{2}$ edges. These correspond bijectively to the $2$-faces of $P$ incident to $x$. Thus, even after removing any $\binom{n}{2} - 1$ $2$-faces, every vertex of $P$ is still contained in at least one $2$-face. Hence $\rho_2(P) \leq f_2 - \binom{n}{2} + 1$, which yields  
  \[
    d(P) \leq \Big\lceil \log_2 \big(f_2 - \binom{n}{2} + 1\big)\Big\rceil + 2. \qedhere
  \]
\end{proof}

\section{Depth Lower Bounds} \label{sec:lower}

In this section, we study depth lower bounds for polytopes, obtained from their graphs and from the depth complexity of their faces.

\begin{proposition} \label{prop:faces} 
  Let $P$ be a polytope. Then for any nonempty face $F$ of $P$, we have $d(P) \geq d(F)$.
\end{proposition}
\begin{proof}
  If $d(P) = 0$, there is nothing to prove. If $d(P) = 1$, then $P$ is a zonotope, and any face $F$ is also a zonotope, hence $d(F) \leq 1$.

  Now suppose, for induction, that the statement holds for all polytopes of depth at most $m-1$, and assume $d(P) = m$. By definition, we can write
  \[
    P = \sum_{i=1}^q \conv{P_{i1}, P_{i2}}, \quad d(P_{ij}) < m \quad \forall i, j.
  \]
  
  Let $F$ be a face of $P$. Then $F = \sum_{i=1}^q F_i$, where each $F_i$ is a face of $\conv{P_{i1}, P_{i2}}$. This implies that $F_i$ can be written as $F_i = \conv{F_{i1}, F_{i2}}$, where $F_{ij}$ is a (possibly empty) face of $P_{ij}$. 

  By the induction hypothesis, whenever $F_{ij} \neq \emptyset$, we have $d(F_{ij}) < m$. If both $F_{i1}$ and $F_{i2}$ are nonempty, then by \cref{prop:basic_upper}(b) we have $d(F_i) \leq m$. If exactly one of $F_{i1}, F_{i2}$ is nonempty, then $F_i$ is equal to that nonempty face, and hence $d(F_i) \leq m$. 

Therefore, in all cases each $F_i$ satisfies $d(F_i) \leq m$. Finally, since $F = \sum_{i=1}^q F_i$, \cref{prop:basic_upper}(a) implies $d(F) \leq m = d(P)$.
\end{proof}

To obtain a lower bound from complete subgraphs contained in $G(P)$, we first state the following lemma.

\begin{lemma}\label{lem:subgraph}
  If the graph of a polytope $G(P)$ contains a complete subgraph with $k\geq 3$ vertices, and $P$ can be decomposed as $P = \sum_{i=1}^q P_i$, then at least one of $G(P_i)$ also contains a complete subgraph with $k$ vertices.
\end{lemma}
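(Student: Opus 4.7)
The plan is to reduce to the case $k = 2$ by induction on $k$, and then exploit the Minkowski-summand structure on vertices and edges. For the inductive reduction, write $P = P_1 + Q$ with $Q = \sum_{i=2}^k P_i$; the $k = 2$ case yields a $K_p$ inside $G(P_1)$ or inside $G(Q)$, and in the latter case applying the induction hypothesis to $Q$ (which has $k - 1$ summands) concludes the proof. The base case $k = 1$ is immediate.

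For $k = 2$, I will rely on the standard fact that $\mathcal{N}(P_1 + P_2)$ is the common refinement of $\mathcal{N}(P_1)$ and $\mathcal{N}(P_2)$. Consequently, each vertex $v$ of $P$ admits a unique decomposition $v = \phi_1(v) + \phi_2(v)$ with $\phi_i(v)$ a vertex of $P_i$, and whenever $\{v_s, v_t\}$ is an edge of $P$ with $\phi_j(v_s) \neq \phi_j(v_t)$, the two points $\phi_j(v_s)$ and $\phi_j(v_t)$ are the endpoints of an edge of $P_j$ whose direction $\phi_j(v_t) - \phi_j(v_s)$ is a positive multiple of $v_t - v_s$; this follows by applying $(P_1)_c + (P_2)_c = P_c$ to a linear functional $c$ that exposes the edge $[v_s, v_t]$ of $P$. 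Hence, if some $\phi_j$ is injective on the vertex set $\{v_1, \ldots, v_p\}$ of the $K_p$, its image gives a $K_p$ inside $G(P_j)$, and we are done.

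It therefore suffices to show that $\phi_1$ or $\phi_2$ is injective on $\{v_1, \ldots, v_p\}$. Suppose $\phi_1$ is not injective: $\phi_1(v_a) = \phi_1(v_b)$ for some $a \neq b$. For any third index $c$ (which exists since $p \geq 3$), assume for contradiction that $\phi_1(v_c) \neq \phi_1(v_a)$. Applying the edge-preservation fact to the edges $\{v_a, v_c\}$ and $\{v_b, v_c\}$ of $P$ gives that both vectors
\[
  \phi_1(v_c) - \phi_1(v_a) \quad \textrm{and} \quad \phi_1(v_c) - \phi_1(v_b) = \phi_1(v_c) - \phi_1(v_a)
\]
are positive multiples of $v_c - v_a$ and of $v_c - v_b$, respectively. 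Therefore $v_c - v_a$ and $v_c - v_b$ are parallel, which forces $v_a, v_b, v_c$ to be collinear; but a polytope cannot have three distinct collinear vertices, since the middle one would fail to be extreme, a contradiction. Hence $\phi_1$ is constant on $\{v_1, \ldots, v_p\}$, and then $\phi_2(v_s) = v_s - \phi_1(v_s)$ is injective on this set, placing a $K_p$ in $G(P_2)$.

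I anticipate the main obstacle to be the clean formulation of the edge-preservation fact, since it is the bridge between the combinatorial statement about $K_p$ and the geometric collinearity argument; once that fact is in place, the hypothesis $p \geq 3$ is used precisely to supply the third vertex $v_c$ that triggers the contradiction.
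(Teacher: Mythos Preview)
Your argument is correct. The edge-preservation fact you state (that for an edge $[v_s,v_t]$ of $P$ the faces $(P_i)_c$ are either a single vertex $\phi_i(v_s)=\phi_i(v_t)$ or an edge $[\phi_i(v_s),\phi_i(v_t)]$ parallel to $[v_s,v_t]$) is exactly what is needed, and your collinearity contradiction cleanly forces the dichotomy ``$\phi_1$ is injective or constant on the $K_p$ vertices.''

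The paper's proof is organized differently: it works with all $k$ summands at once and, for a triangle $u,v,w$ in $G(P)$, shows that the scaling factors $|u_i-v_i|/|u-v|$, $|u_i-w_i|/|u-w|$, $|v_i-w_i|/|v-w|$ coincide for each fixed $i$ (using that $u-v$ and $u-w$ are linearly independent). Since these factors sum to $1$ over $i$, some index $j$ has a positive factor, and this same $j$ then works for every further vertex of the $K_p$ by transitivity through triangles. Your reduction to $k=2$ trades this uniform-ratio computation for a simpler either/or dichotomy, at the cost of an outer induction on $k$; the paper's version avoids the induction but has to argue that the same summand index carries the whole clique. Both routes rest on the identical geometric input (parallel edge decomposition under Minkowski sums), so the difference is largely organizational.

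One small remark: in this paper $\mathcal{N}$ denotes the Newton polytope, so when you invoke ``$\mathcal{N}(P_1+P_2)$ is the common refinement of $\mathcal{N}(P_1)$ and $\mathcal{N}(P_2)$'' you should write ``normal fan'' explicitly to avoid a notational clash.
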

\begin{proof}
  Any edge $e=[x_1,x_2]$ of $P$ can be written as $e = \sum_{i=1}^q e_i$, where each $e_i$ is either a vertex or an edge of $P_i$. If $e_i$ is a vertex, we regard it as a degenerate edge of length zero, so we can say that each edge of $P$ decomposes as a sum of edges of the $P_i$.

  Now let $u,v,w$ be three vertices in the complete subgraph of $G(P)$ (so $[u,v],[u,w],[v,w]$ are edges of $P$). Write
  \[
    [u, v] = \sum_{i=1}^q [u_i, v_i], \quad [u, w] = \sum_{i=1}^q [u_i, w_i], \quad [v, w] = \sum_{i=1}^q [v_i, w_i],
  \]
  where $u=\sum_i u_i$, $v=\sum_i v_i$, $w=\sum_i w_i$ with $u_i,v_i,w_i$ vertices of $P_i$.  

  The edges $[u_i, v_i], [u_i, w_i], [v_i, w_i]$ are parallel to $[u, v], [u, w], [v, w]$ respectively, since otherwise their Minkowski sum would yield a polytope of dimension at least two, contradicting the fact that the corresponding sum is an edge. This implies that each triangle $[u_i, v_i, w_i]$ is homothetic to the triangle $[u, v, w]$, and therefore all edge length ratios are preserved 
  \[
    \frac{|u_i - v_i|}{|u - v|} = \frac{|u_i - w_i|}{|u - w|} = \frac{|v_i - w_i|}{|v - w|}.
  \]
  Note that some triangles $[u_i,v_i,w_i]$ may be degenerate (if two or all three vertices coincide). In this case, the triangle is still homothetic to $[u,v,w]$ with homothety factor $0$, so the ratio identities above remain valid.

  As the triangle $[u, v, w]$ is not degenerate, at least one index $j$ has these ratios nonzero, so $u_j,v_j,w_j$ are distinct vertices forming a triangle in $G(P_j)$.  

  For any other vertex $z$ in the clique of $G(P)$, applying the same argument to the triangles $[u,v,z]$ and $[u, w, z]$ shows that the same index $j$ yields non-degenerate homothetic triangles $[u_j, v_j, z_j]$ and $[u_j, w_j, z_j]$, with $z_j$ distinct from $u_j, v_j, w_j$ and adjacent to them. Repeating this for every vertex in the clique, we conclude that $G(P_j)$ contains a complete subgraph with $k$ vertices.
\end{proof}

\begin{theorem}\label{thm:graph}
  If the graph of a polytope $G(P)$ contains a complete subgraph with $k\geq 3$ vertices, then $d(P) \geq \lceil\log_2 k\rceil$.
\end{theorem}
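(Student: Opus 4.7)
The plan is strong induction on $p \geq 3$, with Lemma \ref{lem:subgraph} serving as the main engine for descending through the recursive definition of $\pp{m}$. Assume throughout that $P\inm\pp{m}$ and $G(P) \supset K_p$. For the base case $p = 3$, I need to rule out $P\in\pp{1}$, i.e.\ that $P$ is a zonotope. If $P = \sum_{i=1}^k [\mathbf{0}, b_i]$, applying Lemma \ref{lem:subgraph} would force some segment $[\mathbf{0}, b_j]$ to contain $K_3$ in its graph, which is absurd since a segment has only two vertices. Since $P$ has at least three vertices, $P\notin\pp{0}$ either, so $m \geq 2 = \lceil\log_2 3\rceil$.

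For the inductive step, assume the claim for all $p'$ with $3 \leq p' < p$. Writing $P = \sum_{i=1}^q \conv{Q_i, R_i}$ with $Q_i, R_i \in \pp{m-1}$, Lemma \ref{lem:subgraph} produces some index $j$ with $G(\conv{Q_j, R_j}) \supset K_p$. Every vertex of $\conv{Q_j, R_j}$ is a vertex of $Q_j$ or $R_j$, so pigeonhole places at least $q' := \lceil p/2 \rceil$ of the $K_p$ vertices in one of them, say $Q_j$. The geometric observation I would then use is that any edge $[u, v]$ of $\conv{Q_j, R_j}$ whose endpoints lie in $Q_j$ is in fact an edge of $Q_j$: any supporting hyperplane $H$ with $H \cap \conv{Q_j, R_j} = [u, v]$ is a supporting hyperplane of $Q_j$ too, and $\{u, v\} \subset H \cap Q_j \subset [u, v]$ forces $H\cap Q_j = [u, v]$. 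Consequently $G(Q_j) \supset K_{q'}$.

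When $q' \geq 3$, the induction hypothesis applied to $Q_j$ gives that its minimal depth is at least $\lceil\log_2 q'\rceil = \lceil\log_2 p\rceil - 1$, and since $Q_j \in \pp{m-1}$, I conclude $m \geq \lceil\log_2 p\rceil$. The only case where $q' < 3$ is $p = 4$, and there $K_4 \supset K_3$ combined with the base case yield $m \geq 2 = \lceil\log_2 4\rceil$. I expect the main subtlety to be the edge-preservation step, which depends on the face-restriction argument for supporting hyperplanes of $\conv{Q_j, R_j}$; everything else, including the rounding identity $\lceil\log_2\lceil p/2\rceil\rceil = \lceil\log_2 p\rceil - 1$ for $p \geq 3$, is routine.
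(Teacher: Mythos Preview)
Your proposal is correct and follows essentially the same route as the paper: apply Lemma~\ref{lem:subgraph} to the Minkowski decomposition, pigeonhole half the clique vertices into one of $Q_j,R_j$, and induct. The only cosmetic difference is that the paper treats $p=3$ and $p=4$ together as the base case (both give $\lceil\log_2 p\rceil=2$), whereas you isolate $p=3$ and reduce $p=4$ to it; also, you spell out the supporting-hyperplane argument for edge preservation that the paper leaves as the phrase ``and consequently the complete subgraph induced by those vertices.''
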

\begin{proof}
  Suppose a subgraph of $G(P)$ is complete and contains $k = 3$ or $k = 4$ vertices. If we assume $d(P) = 1$, then $P = \sum_{i=1}^k P_i$, where each $P_i$ is a segment. This contradicts Lemma \ref{lem:subgraph}, which implies that at least one $P_i$ must include $k$ vertices. Therefore, we conclude $d(P) \geq 2$.

  For the sake of induction, let's assume that the result holds for all cases up to $k-1$. Now, consider that $G(P)$ includes a complete subgraph consisting of $k$ vertices. By definition, we can express $P$ as
  \[
    P = \sum_{i=1}^q \conv{P_i, Q_i}, \quad \textrm{where } d(P_i), d(Q_i) < d(P). 
  \]
  According to Lemma \ref{lem:subgraph}, there exists an index $i$ for which $G(\conv{P_i, Q_i})$ also contains a complete subgraph $K$ with $k$ vertices. Without loss of generality, we can assume that $P_i$ contains at least $\lceil \frac{k}{2}\rceil$ vertices of $K$.

  If two vertices $x, y$ of $\conv{P_i, Q_i}$ span an edge, then they also span an edge in $P_i$ if $x, y \in P_i$. Hence $G(P_i)$ contains not only at least $\lceil \frac{k}{2}\rceil$ vertices of $K$ but also the complete subgraph induced by them. By the induction hypothesis we obtain
  \[
    d(P) - 1 \geq d(P_i) \geq \Big\lceil \log_2 \Big\lceil \frac{k}{2} \Big\rceil \Big\rceil = \big\lceil \log_2 k \big\rceil - 1,
  \]
  from which it follows $d(P) \geq \lceil \log_2 k \rceil$.
\end{proof}

\section{Depth of Polytopes} \label{sec:polytopes}

We begin by computing the depth complexity of polygons, whose depth is at most 2.\\

\noindent\textbf{Polygons.} $d(P) = 1$ if $P$ is a zonotope, otherwise $d(P) = 2$.  
\begin{proof}
  If $P$ is a zonotope, then $d(P)=1$. If $P$ is a triangle, then $d(P)=2$ by \cref{prop:f}(a), since a triangle is not a zonotope.

  Now suppose $P$ is neither a zonotope nor a triangle. Then, $P$ can be decomposed as $P = \sum_{i=1}^k P_i,$ where each $P_i$ is either a zonotope or a triangle \cite[Chapter 15.1]{grunbaum2003convex}. By Proposition \ref{prop:basic_upper}(a), we obtain $d(P) = 2$. 
\end{proof}

A \textit{pyramid} is a polytope of the form $P=\conv{Q,x}$, where $Q$ is the \emph{basis} and $x\in\R^n$ such that $x\not\in\text{aff }Q$, the affine hull of $Q$.\\

\noindent\textbf{Pyramids.} $d(Q) \leq d(P) \leq d(Q) + 1$, where $Q$ is a basis of $P$.  
\begin{proof}
  From \cref{prop:basic_upper}(b) we obtain $d(P) \leq d(Q)+1$. Since $Q$ is a face of $P$, \cref{prop:faces} gives $d(Q) \leq d(P)$. 
\end{proof}

A \textit{bipyramid} is a polytope of the form $P = \conv{Q, x_1, x_2}$, where $Q$ (with $\dim Q \geq 1$) is the \textit{base}, and $x_1, x_2 \notin \text{aff }Q$ are the \textit{apices}, such that the segment $[x_1, x_2]$ intersects the relative interior of $Q$.\\

\noindent\textbf{Bipyramids.} $d(Q) \leq d(P) \leq d(Q) + 1$, where $Q$ is the basis of $P$.  
\begin{proof}
  The upper bound $d(P) \leq d(Q)+1$ follows from \cref{prop:basic_upper}(b). For the lower bound, let $H$ be a hyperplane containing aff $Q$ but not $x_1,x_2$. Projecting $P$ onto $H$ along the line through the apices gives $Q$, hence by \cref{prop:basic_upper}(c) we have $d(Q) \leq d(P)$.
\end{proof}

\noindent\textbf{Cartesian product.} $d(P_1\times P_2)=\maxb{d(P_1),d(P_2)}.$
\begin{proof}
Let
  \[
    P_1' = P_1\times\{0\}, \qquad P_2' = \{0\}\times P_2.
  \]
  Then $P_1\times P_2 = P_1' + P_2'$. Since $P_1'$ and $P_2'$ are affinely equivalent to $P_1$ and $P_2$, respectively, \cref{prop:basic_upper}(a, c) gives
  \[
    d(P_1\times P_2) \leq \maxb{d(P_1),d(P_2)}.
  \]

  For the reverse inequality, choose vertices $x_i$ of $P_i$. Then $P_1\times\{x_2\}$ and $\{x_1\}\times P_2$ are faces of $P_1\times P_2$ affinely equivalent to $P_1$ and $P_2$. By \cref{prop:faces},
  \[
    d(P_i) \leq d(P_1\times P_2),\ \  i = 1, 2.
  \]
  Hence
  \[
    \maxb{d(P_1),d(P_2)} \leq d(P_1\times P_2),
  \]
  which proves the claim.
\end{proof}

A \textit{prism} is a polytope of the form $P = \conv{Q, x + Q}$, where $Q$ is called the base of $P$ and the translation $x + Q\notin\text{aff }Q$.\\

\noindent\textbf{Prisms.} $d(P) = d(Q)$, where $Q$ is the base of $P$.  
\begin{proof} 
  A prism with base $Q$ can be written as $P=Q\times [0,1]$ up to affine equivalence. The claim follows from the cartesian product depth, since $d([0,1])=1$.
\end{proof}

The \textit{cross-polytope} is polytope of the form $P = \conv{\pm e_1, \ldots, \pm e_n}$, where $e_i$ are the standard basis vectors of $\R^n$.\\

\noindent\textbf{Cross-polytopes.} $d(P)=\lceil \log_2 n \rceil$ for $n\geq 2$.  
\begin{proof}
  The vertices $e_1,\ldots,e_n$ form a clique in the graph of $P$, so by \cref{thm:graph},
  \[
    d(P) \geq \lceil \log_2 n \rceil.
  \]

  For the upper bound, proceed by induction on $n$. The case $n=2$ is a square, hence a zonotope, so $d(P)=1$.

  Assume $n\geq 3$ and write $n=a+b$ with $a=\lfloor n/2 \rfloor$ and $b=\lceil n/2 \rceil$. Identifying $\R^n = \R^a\times\R^b$, we have
  \[
    P = \conv{P_a\times\{0\},\; \{0\}\times P_b},
  \]
  where $P_a, P_b$ are the cross-polytopes of dimension $a$ and $b$, respectively. Therefore, by \cref{prop:basic_upper}(b, c), the cartesian product depth, and the induction hypothesis,
  \begin{align*}
    d(P)
    &\leq \maxb{d(P_a\times\{0\}), d(\{0\}\times P_b)} + 1 \\
    &= \maxb{d(P_a), d(P_b)} + 1 \\
    &= \lceil \log_2 b \rceil + 1 \\
    &\leq \lceil \log_2 n \rceil.
  \end{align*}
  Hence $d(P) = \lceil \log_2 n \rceil$.
\end{proof}

A \textit{2-neighborly polytope} $P$ is a polytope where the segment between any two vertices is an edge of $P$.\\

\noindent\textbf{2-neighborly polytopes.} $d(P) = \lceil \log_2 k \rceil$, where $k$ is the number of vertices of $P$.  
\begin{proof}
  Since $G(P)$ is complete, this follows from \cref{prop:f}(a) and \cref{thm:graph}.  
\end{proof}

As simplices are 2-neighborly we obtain directly their depth complexity.\\

\noindent\textbf{Simplices.} $d(P) = \lceil\log_2 (n + 1) \rceil$.\\

Using \cref{thm:cpwl-poly}, we obtain a geometric proof of \cref{thm:arora} for ReLU networks: the Newton polytope of $\maxb{x_1, \ldots, x_n, 0}$ is an $n$-simplex, so the computed depth for simplices recovers the same depth bound as \cref{thm:arora}.

The \textit{join} of polytopes is a polytope of the form $P_1 * \cdots * P_k = \conv{P_1, \ldots, P_k}$, where the $P_i$ lie in pairwise skew affine subspaces.\\

\noindent\textbf{Join of $(2^{m_i}-1)$-simplices.} $d(P) = \Big\lceil \log_2\Big(\sum_{i=1}^k 2^{m_i}\Big)\Big\rceil$, for $P = P_1 * \cdots * P_k$, where each $P_i$ is a $(2^{m_i}-1)$-simplex.
\begin{proof}
  For each $i$, the simplex $P_i$ has depth $d(P_i)=m_i$. Since $P = \conv{P_1, \ldots, P_k}$, \cref{prop:optimal_hull} gives
  \[
    d(P) \leq \Big\lceil \log_2\Big(\sum_{i=1}^k 2^{m_i}\Big)\Big\rceil.
  \]

  On the other hand, the graph of $P$ is complete on $\sum_{i=1}^k 2^{m_i}$ vertices, since every pair of vertices from different $P_i$ is adjacent. Therefore, \cref{thm:graph} yields
  \[
    d(P) \geq \Big\lceil \log_2\Big(\sum_{i=1}^k 2^{m_i}\Big)\Big\rceil,
  \]
  proving the claim.
\end{proof}

The depth computation for join of simplices shows that for every vector $(m_1, \ldots, m_k) \in \mathbb{N}^k$, there are polytopes attaining exactly the upper bound of \cref{prop:optimal_hull}.

A \textit{cyclic polytope} $C_n(k), k > n,$ is a polytope of the form
\[
  C_n(k) = \conv{\gamma(t_1), \ldots, \gamma(t_k)},
\]
where $t_1 < \ldots < t_k$ are real numbers and $\gamma(t) = (t, t^2, \ldots, t^n)$. For $n\geq 4$, cyclic polytopes are 2-neighborly, therefore we obtain directly their depth complexity.\\

\noindent\textbf{Cyclic polytopes.}  $d(C_n(k)) = \lceil\log_2 k \rceil$ for $n\geq 4$.\\

Since $d(P) \leq d_0(P)$, where $d_0(P)$ is the minimum ICNN depth required to represent $\mathcal{F} P$, the depth of cyclic polytopes for $n\geq 4$ shows that ICNNs do not satisfy a result like \cref{thm:arora}. Unlike general ReLU networks, no fixed depth suffices to represent $\mathcal{F} C_n(k)$ as $k$ grows.

For $n=2$, polygons were shown to have bounded depth $d(P)\leq 2$, while the case $n=3$ remains open. We show that bipyramids with a triangular base provide examples of polyhedra of depth 3. This demonstrates that depth complexity in dimension $n=3$ behaves differently from the case $n=2$.

Before computing the depth complexity of triangular bipyramids, we first introduce a simple tool for indecomposable polytopes.

Two polytopes, $P$ and $Q$, are said to be \textit{positively homothetic}, if $P = \lambda Q + w$ for some $\lambda > 0$ and $w\in\R^n$. A polytope $P$ is said to be \textit{indecomposable} if any decomposition $P = \sum_{i=1}^k P_i$ is only possible when $P_i$ is positively homothetic to $P$ for all $i=1, \ldots, k$. 

\begin{lemma} \label{lem:indecomposable}
  If $P$ is an indecomposable polytope, then there exist polytopes $P_1, P_2$ such that $P = \conv{P_1, P_2}$ and $d(P) = \maxb{d(P_1), d(P_2)} + 1$.
\end{lemma}
\begin{proof}
  By definition, if $d(P) = m$, then $P$ can be written as
  \[
    P = \sum_{i = 1}^q \conv{P_{i1}, P_{i2}}, \quad d(P_{ij}) < m \quad \forall i,j,
  \]
  where $m$ is the smallest integer for which such a representation exists. Hence there must exist an index $i$ such that $\maxb{d(P_{i1}), d(P_{i2})} + 1 = d(P)$.

  Since $P$ is indecomposable, there are $\lambda_i > 0$ and $w_i \in \mathbb{R}^n$ where $P$ satisfies
  \[
    P = \lambda_i \conv{P_{i1}, P_{i2}} + w_i = \conv{\lambda_i P_{i1} + w_i, \lambda_i P_{i2} + w_i}.
  \]
  Each $\lambda_i P_{ij} + w_i$ is an affine image of $P_{ij}$ under an invertible map. By \cref{prop:basic_upper}(c), affine equivalence preserves depth, so 
  \[
    d(P) = \maxb{d(P_{i1}), d(P_{i2})} + 1 = \maxb{d(\lambda_i P_{i1} + w_i), d(\lambda_i P_{i2} + w_i)} + 1. \qedhere
  \]
\end{proof}

\noindent\textbf{Triangular bipyramids in $\R^3$.} $d(P) = 3$.  
\begin{proof}
  A bipyramid with a 2-depth basis satisfies $2 \leq d(P) \leq 3$. Assume, for contradiction, that $d(P)=2$. As $P$ is indecomposable \cite[Chapter 15.1]{grunbaum2003convex}, Lemma \ref{lem:indecomposable} implies that there exist $P_1, P_2$ such that $P=\conv{P_1,P_2}$ and $\max\{d(P_1),d(P_2)\} + 1=2$. Without loss of generality, suppose $P_1$ is a zonotope containing at least three vertices of $P$.

  There are three cases: $P_1$ contains two base vertices and an apex; or the three base vertices; or one base vertex and the two apices. In each case $P_1$ is not centrally symmetric, contradicting that it is a zonotope. Hence $d(P)\neq 2$ and so $d(P)=3$. 
\end{proof}

We conclude with a complementary construction: whereas for cyclic polytopes the depth grows with the number of vertices, here we obtain polytopes with arbitrarily many vertices at fixed depth.

\begin{theorem} \label{thm:fixed}
  For any $n \geq 5$ and $m\in\N$, there exists a family of $n$-polytopes with arbitrarily many vertices and depth $m$.
\end{theorem}
\begin{proof}
  Let $g_i = [\mathbf{0}, b_i], i = 1, \ldots, p$, represent line segments from the origin with $b_1, \ldots, b_p$ denoting $p \geq n$ points in general position. The zonotope $Z_p = \sum_{i=1}^p g_i$ has depth 1 and $v_p = 2\sum_{i=0}^{n-1}\binom{p-1}{i}$ vertices \cite{zaslavsky1975facing}, which grows with $p$.

  For any $m \geq 2$, let $H$ be a supporting hyperplane of a vertex $x$ of $Z_p$, and take a polytope $P \subset H$ with $\dim P < n$ and $d(P)=m$ (e.g. a simplex or cyclic polytope). 

  Then $P+Z_p$ satisfies $d(P+Z_p) \leq m$ by \cref{prop:basic_upper}(a). On the other hand, the face of $P+Z_p$ in the direction of $x$ is a translate of $P$, hence by \cref{prop:faces}, $m = d(P) \leq d(P+Z_p)$. Thus $P + Z_p$ forms a $m$-depth family whose number of vertices increases with $p$.
\end{proof}

Similar constructions exist for $n \leq 4$ and $m \leq 3$, but a full generalization of Theorem \ref{thm:fixed} for $n=3,4$ and arbitrary $m$ would require a better understanding of depth complexity in dimension $3$.

\bibliographystyle{plain}
\bibliography{bib}

\end{document}